\newcommand{\vecmu}{\bm\mu}
\newcommand{\vecx}{\mathbf{x}}
\newcommand{\vecy}{\mathbf{y}}
\newcommand{\sampcov}{\mathbf{S}}
\newcommand{\vectheta}{\bm\theta}
\newcommand{\Beta}{\mbox{\boldmath$\beta$}}
\newcommand{\vecvartheta}{\boldsymbol\vartheta}
\newcommand{\vecSigma}{\bm\Sigma}
\newtheorem{lemma}{Lemma}
\DeclarePairedDelimiter\abss{\lvert}{\rvert}
\newtheorem{definition}{Definition}
\newtheorem{remark}{Remark}
\newcommand{\argmax}{\arg\max}
\title{funOCLUST: Clustering Functional Data with Outliers}
\author[1]{Katharine M. Clark\orcidlink{0000-0002-6162-2300}}
\author[2]{Paul D. McNicholas\orcidlink{0000-0002-2482-523X}}
\affil[1]{\small Department of Mathematics \& Statistics, Trent University, Ontario, Canada.}
\affil[2]{\small Department of Mathematics \& Statistics, McMaster University, Ontario, Canada.}
\date{} 
\begin{document}

\maketitle

\begin{abstract}
Functional data present unique challenges for clustering due to their infinite-dimensional nature and potential sensitivity to outliers. An extension of the OCLUST algorithm to the functional setting is proposed to address these issues. The approach leverages the OCLUST framework, creating a robust method to cluster curves and trim outliers.  The methodology is evaluated on both simulated and real-world functional datasets, demonstrating strong performance in clustering and outlier identification. \\[-10pt]
	
	\noindent\textbf{Keywords}: OCLUST,  curves, mixture models, outliers.
\end{abstract}
%
%
\section{Introduction}

Functional data analysis presents a unique challenge due to the nature of the data. While functions are observed at discrete time points, they exist in an infinite-dimensional function space, which requires methods that respect their smoothness, continuity, and underlying structure rather than treating them as simple multivariate observations. A natural secondary aim of functional data analysis is to group similar curves or trajectories using cluster analysis, e.g. clustering girls' versus boys' growth curves  and weather stations based on recorded temperatures \citep{ramsay05}. 

Clustering is a subtype of classification, i.e., unsupervised classification, which aims to group similar sets of observations together without any prior knowledge of the cluster memberships. While non-parametric methods exist, e.g., hierarchical and $k$-means clustering, model-based clustering employs a parametric approach. In general, the density of a finite mixture model is
\begin{equation}
	f(\vecx\mid\vecvartheta)=\sum_{g=1}^{G}\pi_g f_g(\vecx\mid\vectheta_g),
	\label{eq:generalmixture2}
\end{equation}
where $\vecvartheta=\{\pi_1, \dots, \pi_G, \vectheta_1, \dots \vectheta_G\}$, $\pi_g>0$ is the $g$th mixing proportion with $\sum_{g=1}^G \pi_g =1$, and $f_g(\vecx\mid\vectheta_g)$ is the $g$th component density with parameters $\vectheta_g$. Typically, each component corresponds to a cluster \citep[see][for a discussion]{mcnicholas16a}. While each cluster can be modelled with various component distributions, the Gaussian distribution remains popular due to its simplicity. The parameters and cluster memberships are estimated by maximizing the log-likelihood, often with the expectation-maximization (EM) algorithm \citep{dempster77}.

While model-based clustering methods for functional data have been developed \citep[e.g.,][]{james03, bouveyron11b}, they remain sensitive to noisy observations. Alternatives with more robust distributions \citep{anton23, amovin22} and trimming approaches \citep{rivera19} have been proposed, but these methods remain focused on subspace-specific clustering. While subspace methods can be highly effective, there are settings in which the clustering structure requires the entire functional domain, making dimension reduction less desirable.

Non-parametric trimming methods do exist \citep[e.g.,][]{garcia05}, but there is a scarcity of trimming algorithms within the model-based functional data clustering domain. The proposed algorithm, funOCLUST, combines trimming with model-based clustering to detect outliers and cluster similar functions simultaneously while retaining information from the full functional representation.

\section{Preliminaries}
\subsection{OCLUST Algorithm}

In the multivariate normal data domain, \cite{clark24} develop the OCLUST algorithm, which simultaneously clusters data and trims outliers. Data are modelled with mixtures of Gaussian distributions, and outliers are trimmed iteratively one-by-one. A subset log-likelihood is defined to be the log-likelihood of the data with a single data point removed, and there are $n$ such subsets. After each outlier removal, the distribution of the subset log-likelihoods is compared to the mixture of shifted and scaled beta distributions derived in the paper. Specifically, let $l_{\mathcal{X}}$ be the complete-data log-likelihood of the entire dataset, let $l_{\mathcal{X} \setminus \vecx_j}$ be the complete-data log-likelihood of the $j$th subset, i.e., with observation $\vecx_j$ removed, and let $D_j=l_{\mathcal{X} \setminus \vecx_j}-l_{\mathcal{X}}$. Then, if $\vecx_j$ belongs to the $h$th cluster, i.e. with indicator variable $z_{jh}=1$,
\begin{equation}
	\label{eq:L}
	D_j \mid (z_{jh}=1) \sim f_{\text{beta}}\left(\frac{2n_h}{(n_h-1)^2} (d_j-c)~\bigg|~\frac{p}{2},\frac{n_h-p-1}{2}\right)
\end{equation}
for $c<d_j<\frac{(n_h-1)^2}{2n_h}+c, n_h>p+1$, where $c=-\log\hat{\pi}_h+\frac{p}{2}\log(2\pi)+\frac{1}{2}\log\abss{\sampcov_h}$, {$n_h$ is the number of points in cluster $h$}, $\hat{\pi}_h=n_h/n$, $$\sampcov_h=\frac{1}{n_h-1}\sum_{i=1}^n z_{ih}(\vecx_i-\bar{\vecx}_h)(\vecx_i-\bar{\vecx}_h)^\top$$ is the sample covariance matrix of cluster $h$, and $\bar{\vecx}_h=\frac{1}{n_h}\sum_{i=1}^n z_{ih}\vecx_i.$ The trimmed model chosen minimizes the Kullback-Leibler (KL) divergence between the observed differences and the derived distribution. 

\subsection{Functional Decomposition}
In multivariate data analysis, a random sample ($\vecx_1, \ldots \vecx_n)$ is typically a collection of vectors in $\mathbb{R}^p$. In functional data, a random sample of functions ($Y_1(t), \ldots Y_n(t)), t\in \mathcal{T} \subset \mathbb{R}$ exists in infinite-dimensional space. In practice, however, these functions are evaluated at a series of discrete values $\{t_{i1},\ldots, t_{ij_{i}}\}$, resulting in observations of the form $(\mathbf{t}_i, \vecy_i)$, where $\mathbf{t}_i=(t_{i1},\ldots, t_{ij_{i}}) $ is a vector of points in the domain and $\vecy_i=(y_{i1},\ldots, y_{ij_{i}})$ is the vector of observed values at the points $\mathbf{t}_i$. To return to infinite-dimensional space, we can reconstruct the functional form. A popular choice is a basis expansion, which is well explained by \cite{ramsay05}.

In a basis expansion, the function $Y(t)$ is approximated by a linear combination of basis functions so that $\hat Y(t)=\sum_k \beta_k B_k(t)$,
where $B_k(t)$ is the value at $t$ for the $k$th basis function. Basis functions must be independent from each other and must be able to approximate any function with reasonable accuracy. Some choices for basis functions are Fourier series, wavelet functions, power series, exponential, polynomial and power bases, as well as B-spline basis functions. 

Functional principal components analysis (FPCA) is a special type of basis expansion, where $Y(t)$ is approximated by a linear combination of mutually orthogonal and normalized weight functions that explain decreasing amounts of variation in the sample ($Y_1(t), \ldots Y_n(t))$. 

\subsection{Clustering Functional Data}\label{sec:funclust}
\cite{jacques14} organize functional data clustering into three paradigms: two-stage methods, nonparametric methods, and model-based methods. In model-based approaches, the filtering and clustering steps are completed simultaneously. \cite{james03} introduce a Gaussian mixture model for functional clustering by assuming that the coefficients in the basis expansion are normally-distributed. These coefficients are not fixed; rather, they are estimated within the EM algorithm. Similarly, \cite{jacques13} generate a Gaussian mixture model with FPCA scores. 

\cite{abraham03} propose a two-stage approach, first filtering the data with a B-spline basis and then clustering the resulting coefficients with the $k$-means algorithm. In this case, because the filtering is completed first, the coefficients are fixed. Similarly, \cite{peng08} first use FPCA to decompose the functions. The FPCA scores are then scaled and clustered using $k$-means clustering.  \cite{nguyen18} propose a two-part algorithm, first fitting the B-spline basis coefficients to the functions, and then applying a Gaussian mixture model to those coefficients.  \cite{bouveyron11b} also use a basis expansion but clustering is performed through a model in which cluster-specific subspaces are estimated along with the model parameters, thus reducing the dimension when there are large numbers of basis functions. The resultant algorithm is called funHDDC. 

\subsection{Outliers in Functional Data Clustering}
\cite{hubert15} separate functional outliers into two types: isolated outliers and persistent outliers. While isolated outliers are atypical in a narrow region, e.g., a spike, persistent outliers affect a larger portion of the domain, e.g., shift, amplitude, or shape. In functional data clustering, algorithms that handle outliers also belong to the three paradigms in Section~\ref{sec:funclust}. Using a two-stage approach, \cite{garcia05} propose a method to cluster functional data with outliers by first decomposing the functions using a B-spline basis, and then clustering the resulting coefficients with trimmed $k$-means clustering. 

In the model-based setting, \cite{rivera19} use a funHDDC-type model to integrate the cluster-specific subspace concept with trimming by maximizing a trimmed pseudo log-likelihood. The funHDDC approach has also been extended to accommodate more flexible distributions, improving its robustness to outliers. The C-funHDDC method combines a contaminated mixture model with funHDDC by assuming that the functions reside in a lower-dimensional cluster-specific subspace \citep{amovin22, smith22}. Similarly, \cite{anton23}, use a t-distribution with the funHDDC foundation to generate a more robust model in the presence of outliers, called T-funHDDC.

In a non-parametric approach, a functional isolation forest \citep[FIF;][]{staerman19} is a collection of isolation trees which randomly split the functional space into partitions, iteratively, until each function is isolated. Functions isolated in fewer iterations tend to be more outlying. An outlier score is calculated using a forest of these isolation trees. Finally, \cite{hubert15} determine functional outlyingness by first calculating outlyingness at each time point and then taking the weighted average to create an outlier score for each curve. 

\section{Methodology}\label{chap:fun}
\subsection{Overview}
The OCLUST algorithm hinges on the fact that log-likelihoods of the data subsets have an approximate shifted and scaled beta distribution. Extending the OCLUST algorithm to functional data will require something similar: a model with an associated log-likelihood function that can be evaluated on data subsets, and for which the distribution of these subset log-likelihoods can be derived.

While functions exist in infinite-dimensional space, they are observed at a series of discrete points $(t_{1}, ..., t_j)$. If each curve is sampled at the same points, then the data for each curve can be considered as a vector. If these vectors arise from a finite Gaussian mixture, then there will be an associated log-likelihood for the data and associated parameters $\pi_g, \vecmu_g, \vecSigma_g$, for each component. An obvious choice would be to apply the OCLUST algorithm to the discretized values when modelling a finite Gaussian mixture model of functional data with outliers. However, these vectors will be the same dimension as the number of points at which each function is sampled. To retain enough information about the curve, the vectors of observations are likely to be very high-dimensional.

Instead, consider applying a filtering approach to the raw data, specifically their decomposition with cubic B-splines,
\begin{equation}
	Y_i(t)=\sum_{k=1}^{K+4} \beta_{i,k} B(t)+\varepsilon_i= \mathbf{B}(t) \boldsymbol{\Beta}_i+\varepsilon_i,
\end{equation} 
where $\mathbf{B}(t)$ is the cubic B-spline basis with $K$ interior knots evaluated at point $t$, $\boldsymbol{\Beta}_i$ is the vector of coefficients for the basis expansion of function $Y_i(t)$, and $\varepsilon_i\sim N(0,\sigma^2)$ is the error term. This retains the infinite-dimensional properties of the functions, and each function has a unique $(K+4)$-dimensional representation given by the vector $\boldsymbol{\Beta}_i$. In effect, each $Y_i(t)$ is transformed to a vector in $\mathbb{R}^{K+4}$, the dimension of which is governed only by the choice in number of interior knots. Transforming $Y_i(t)$ maintains the principle of working with functions while creating a vector representation in a much more manageable dimension. If we assume that the vectors $\mathbf{\Beta}_i$ come from a multivariate normal distribution, then the extension of the OCLUST algorithm to functional data becomes straightforward. 

\subsection{Subsets and the Distribution of Log-Likelihoods}\label{sec:fundist}
Consider a sample of i.i.d.\ functions ($Y_1(t), \ldots, Y_n(t)$). Suppose that each $Y_i(t)$ can be decomposed into a linear combination of cubic B-splines, i.e., $Y_i(t)=\mathbf{B}(t) \mathbf{\Beta}_i+\varepsilon_i$, where $\varepsilon_i\sim N(0,\sigma^2)$. To capture heterogeneity across functional observations and enable clustering of similar functional patterns, \cite{nguyen18} propose that each $\mathbf{\Beta}_i$ arises from a finite Gaussian mixture model with density
\begin{equation}
	f_{\mathbf{\Beta}}(\mathbf{x} \mid \vecvartheta)=\sum_{g=1}^G \pi_g\phi(\mathbf{x}\mid \vecmu_g, \vecSigma_g).
\end{equation}
In addition, they show that the fitted coefficients ($\mathbf{b}_1,\ldots,\mathbf{b}_n$), corresponding to functions ($Y_1(t), \ldots, Y_n(t)$), follow their own Gaussian mixture model with density

\begin{equation}\label{eq:fitbmix2}
	f_{\hat{\mathbf{\Beta}}}(\mathbf{b} \mid \vecvartheta)=\sum_{g=1}^G \pi_g\phi(\mathbf{b}\mid \vecmu_g,\vecSigma_g+\sigma^2(\mathbf{B}^\top\mathbf{B})^{-1}),
\end{equation}
where $\mathbf{B}=\mathbf{B}(\mathbf{t})$ is the matrix of basis functions evaluated at points $\mathbf{t}=(t_1,\ldots,t_j)$, the fitted functions are $\hat{Y}_i(t)=\mathbf{B}\mathbf{b}_i$, and the OLS-estimated coefficients are given by $\mathbf{b}_i=(\mathbf{B}^\top\mathbf{B})^{-1}\mathbf{B}^\top\mathbf{Y}_i$, with $\mathbf{Y}_i=(Y_i(t_1),\ldots, Y_i(t_j))$. Note that all functions $Y_i(t)$ are observed at the same time points $\mathbf{t}=(t_1,\ldots,t_j)$.

These principles allow us to formulate Lemma~\ref{theo:fun}. Consider the complete set of fitted coefficients $\mathcal{X}=\{\mathbf{b}_1,\ldots,\mathbf{b}_n\}$ generated by decomposing the functions $(Y_1(t), \ldots, Y_n(t))$ with a cubic B-spline basis with $K$ interior knots. Define the $j$th subset $$\mathcal{X}\setminus \mathbf{b}_j=\{\mathbf{b}_1, \ldots, \mathbf{b}_{j-1},\mathbf{b}_{j+1}, \ldots, \mathbf{b}_{n}\}$$ as the set of fitted coefficients for the functions $(Y_1(t), \ldots, Y_{j-1}(t),Y_{j+1}(t), \ldots, Y_{n}(t))$, i.e., the entire sample with the $j$th function removed. Let $l_{\mathcal{X}}$ denote the complete-data log-likelihood
\begin{equation}
	l_{\mathcal{X}}=\sum_{i=1}^n \sum_{g=1}^G z_{ig}\left[\log\pi_g + \log\phi(\mathbf{b}_i\mid \vecmu_g, \vecSigma_g)\right],
	\label{eq:cdlogb}
\end{equation}
where the complete-data comprise the coefficients $\{\mathbf{b}_1,\ldots,\mathbf{b}_n\}$ and the cluster memberships $\{\mathbf{z}_1,\ldots,\mathbf{z}_n\}$. Note that $z_{ig}=1$ when coefficient $\mathbf{b}_i$ belongs to cluster $g$, and $z_{ig}=0$ otherwise. 
\begin{lemma}\label{theo:fun}
	Consider a vector of fitted coefficients $\mathbf{b}_j$ generated by decomposing the function $Y_j(t)$ using cubic B-spline basis functions with $K$ interior knots. If $Y_j(t)$ belongs to the $h$th cluster, i.e., $z_{jh}=1$, $l_{\mathcal{X}}$ is the complete-data log-likelihood, and $D_j=l_{\mathcal{X} \setminus \mathbf{b}_j}-l_{\mathcal{X}}$, then $D_j \mid (z_{jh}=1) $ has an approximate shifted and scaled beta density, i.e.,
	\begin{equation}
		\label{eq:fundist}
		D_j \mid (z_{jh}=1) \sim f_{\text{beta}}\left(\frac{2n_h}{(n_h-1)^2} (d_j-c)~\bigg|~\frac{K+4}{2},\frac{n_h-K-5}{2}\right)
	\end{equation}
	for $c<d_j<\frac{(n_h-1)^2}{2n_h}+c, n_h>K+5$, where $c=-\log\hat{\pi}_h+\frac{K+4}{2}\log(2\pi)+\frac{1}{2}\log\abss{\sampcov_h}$, {$n_h$ is the number of points in cluster $h$}, $\hat{\pi}_h=n_h/n$, $$\sampcov_h=\frac{1}{n_h-1}\sum_{i=1}^n z_{ih}(\mathbf{b}_i-\bar{\mathbf{b}}_h)(\mathbf{b}_i-\bar{\mathbf{b}}_h)^\top$$ is the sample covariance matrix of cluster $h$,  $\bar{\mathbf{b}}_h=\frac{1}{n_h}\sum_{i=1}^n z_{ih}\mathbf{b}_i,$ and $K$ is the number of interior knots in the cubic spline basis used to generate $\{\mathbf{b}_1,\ldots,\mathbf{b}_n\}$.
\end{lemma}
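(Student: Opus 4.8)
\noindent The plan is to exploit the fact that, once each curve $Y_i(t)$ is replaced by its $(K+4)$-dimensional vector of fitted spline coefficients $\mathbf{b}_i$, the problem becomes exactly the multivariate OCLUST problem in dimension $p=K+4$. Consequently the target density \eqref{eq:fundist} should be nothing more than \eqref{eq:L} with $p$ replaced by $K+4$, and the work reduces to checking that the hypotheses behind \eqref{eq:L} carry over to the coefficient representation and then tracking the dimension through the derivation.

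First I would fix the probabilistic model at the level of coefficients. By \eqref{eq:fitbmix2} the fitted coefficients $\{\mathbf{b}_1,\ldots,\mathbf{b}_n\}\subset\mathbb{R}^{K+4}$ form a sample from a $G$-component Gaussian mixture, so conditioning on $z_{jh}=1$ places $\mathbf{b}_j$, together with the other coefficients assigned to cluster $h$, in an (approximately) i.i.d.\ $N_{K+4}(\vecmu_h,\vecSigma_h)$ sample of size $n_h$. The complete-data log-likelihood \eqref{eq:cdlogb} is then precisely the Gaussian-mixture complete-data log-likelihood underlying OCLUST, the only change being that the observations now live in $\mathbb{R}^{K+4}$ rather than $\mathbb{R}^p$. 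This is the step at which the word ``approximate'' in the statement is incurred: cluster membership is itself estimated, and within cluster $h$ the mixture is treated as a clean single Gaussian.

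Next I would reduce $D_j$ to a Mahalanobis distance. Writing both log-likelihoods at their maximum-likelihood estimates, every contribution from clusters $g\neq h$ and from every coefficient other than $\mathbf{b}_j$ cancels, leaving $D_j=-\log\hat{\pi}_h-\log\phi(\mathbf{b}_j\mid\bar{\mathbf{b}}_h,\sampcov_h)$. Expanding the $(K+4)$-variate Gaussian density gives
\begin{equation*}
D_j=c+\tfrac12(\mathbf{b}_j-\bar{\mathbf{b}}_h)^\top\sampcov_h^{-1}(\mathbf{b}_j-\bar{\mathbf{b}}_h),\qquad c=-\log\hat{\pi}_h+\tfrac{K+4}{2}\log(2\pi)+\tfrac12\log\abss{\sampcov_h},
\end{equation*}
which matches the stated $c$ exactly, with the dimension entering only through the $\tfrac{K+4}{2}\log(2\pi)$ term. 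If instead the subset log-likelihood is evaluated at the re-estimated cluster-$h$ parameters, deletion of one point perturbs $n_h$, $\bar{\mathbf{b}}_h$ and $\sampcov_h$ by $O(1/n_h)$, and these corrections---quantifiable by the Sherman--Morrison formula and the matrix-determinant lemma---are dropped. I would then invoke the classical distribution of the internally studentized (sample) Mahalanobis distance: for an i.i.d.\ $N_{K+4}$ sample of size $n_h$,
\begin{equation*}
\tfrac{n_h}{(n_h-1)^2}(\mathbf{b}_j-\bar{\mathbf{b}}_h)^\top\sampcov_h^{-1}(\mathbf{b}_j-\bar{\mathbf{b}}_h)\sim\mathrm{Beta}\!\left(\tfrac{K+4}{2},\tfrac{n_h-K-5}{2}\right).
\end{equation*}
Substituting $d_j-c=\tfrac12(\mathbf{b}_j-\bar{\mathbf{b}}_h)^\top\sampcov_h^{-1}(\mathbf{b}_j-\bar{\mathbf{b}}_h)$ turns the beta-distributed quantity into $\tfrac{2n_h}{(n_h-1)^2}(d_j-c)$; the support $(0,1)$ of the beta yields the range $c<d_j<\tfrac{(n_h-1)^2}{2n_h}+c$, and positivity of the second beta parameter forces $n_h>K+5$, exactly as claimed.

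The main obstacle is controlling what is swept into ``approximate.'' Two sources must be handled: the conditional within-cluster normality of the fitted coefficients (needed for the Mahalanobis-distance result, and complicated by the inflated covariance $\vecSigma_g+\sigma^2(\mathbf{B}^\top\mathbf{B})^{-1}$ of the fitted coefficients), and the $O(1/n_h)$ corrections from re-estimating the cluster-$h$ parameters after deletion. Once $D_j$ is shown to collapse to the shifted half-Mahalanobis distance, the beta law is immediate and the entire functional content of the lemma is carried by the single substitution $p\mapsto K+4$ coming from the cubic spline basis having $K+4$ functions.
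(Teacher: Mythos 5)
Your proposal is correct and follows essentially the same route as the paper: the paper's proof simply substitutes $\{\mathbf{b}_1,\ldots,\mathbf{b}_n\}$ for the data vectors, $\vecSigma_g+\sigma^2(\mathbf{B}^\top\mathbf{B})^{-1}$ for $\vecSigma_g$, and $p=K+4$ into Theorem~1 of \cite{clark24}, which is exactly the reduction you identify. The only difference is that you unpack the internals of that cited theorem (the collapse of $D_j$ to a shifted half-Mahalanobis distance and the beta law for the internally studentized distance), whereas the paper treats it as a black box.
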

\begin{proof}
	The proof requires the same assumptions as in \cite{clark24}, notably that the clusters are well-separated and that the number of observations in each cluster is large. As in \cite{clark24}, these assumptions can be relaxed in practice. 
	
	The fitted $(K+4)$-dimensional coefficients $\{\mathbf{b}_1,\ldots,\mathbf{b}_n\}$ are i.i.d.\ and assumed to arise from a finite Gaussian mixture model according to \eqref{eq:fitbmix2}. Using $\{\mathbf{b}_1,\ldots,\mathbf{b}_n\}$ in place of $\{\vecx_1,\ldots,\vecx_n\}$, $\vecSigma_g+\sigma^2(\mathbf{B}^\top\mathbf{B})^{-1}$ in place of $\vecSigma_g$, and $p=K+4$ in Theorem~1 of \cite{clark24} yields the desired result. 
\end{proof}

Using the density from \eqref{eq:fundist} we can generate the unconditional density of $D$ using a mixture model, i.e.,
\begin{equation}
	f(d\mid\vecvartheta)=\sum_{g=1}^{G}{\pi}_g  f_g(d \mid \vectheta_g),
	\label{eq:mixdensfun}
\end{equation}
where $f_g(d \mid \vectheta_g)$ is the shifted and scaled beta density described in \eqref{eq:fundist}, and $\vectheta_g=\{n_g,K,\hat{\pi}_g, \bar{\mathbf{b}}_g, \sampcov_g\}$.
\begin{remark}\label{rem:fun}
	The density in \eqref{eq:fundist} describes $D$ under the assumption that each $\mathbf{b}_i$ arises from a finite Gaussian mixture model and that there are no outlying coefficients.  If  \eqref{eq:fundist} fails to describe $D$, then we can conclude that a model assumption has been violated. In this case, we assume that the model is correctly specified apart from the presence of outliers. \end{remark}

\subsection{Multivariate and Functional Outliers}

While the distribution in Section~\ref{sec:fundist} used the complete-data log-likelihood, we now change to using the traditional log-likelihood, given by 
\begin{equation}
	\ell_{\mathcal{X}}(\vecvartheta)=\sum_{i=1}^n \log \left[\sum_{g=1}^G \pi_g \phi(\mathbf{b}\mid \vecmu_g,\vecSigma_g+\sigma^2(\mathbf{B}^\top\mathbf{B})^{-1})\right].
	\label{eq:loglikb}
\end{equation}
This choice is made because it is outputted by most clustering algorithms. In addition, \cite{clark24} show that $\ell_{\mathcal{X}}\rightarrow l_{\mathcal{X}} $ as the clusters separate. The funOCLUST algorithm removes candidate outliers iteratively until the subset log-likelihoods of the B-spline coefficients conform to the derived distribution. It is thus necessary to define a candidate outlier. 
\begin{definition}[Candidate Outlier] We define our candidate outlier as $\mathbf{b}_{k}$, where
	\begin{equation*}
		k=\argmax_{j \in [1,n]} \ell_{\mathcal{X} \setminus \mathbf{b}_j}, 
		\label{eq:argmax}
	\end{equation*}
	and $\ell_{\mathcal{X} \setminus \mathbf{b}_j}$ is the log-likelihood of the subset with the point $\mathbf{b}_j$ removed.
	\label{def:funout}
\end{definition}
The most outlying coefficient is the one whose removal yields the greatest increase in log-likelihood, i.e., the model improves the most in its absence. When treating the coefficients as a multivariate normal dataset, this is consistent with the OCLUST algorithm. However, before proceeding, it is necessary to establish that a function $Y_i(t)$ is outlying when its set of coefficients $\mathbf{b}_i$ is outlying in terms of multivariate normality. 

\begin{lemma}
	If a coefficient vector $\mathbf{b}_i \in (\mathbf{b}_1, \ldots, \mathbf{b}_n)$ is outlying under a multivariate normal model, then the corresponding function $Y_i(t)$ is outlying, where
	\begin{equation}
		Y_i(t)= \mathbf{B}(t)\mathbf{b}_i+e_i,
	\end{equation}
	$\mathbf{B}(t)$ is the matrix of cubic B-spline basis functions with $K$ interior knots, and $e_i$ is the error term, i.e., $e_i=Y_i(t)-\hat{Y}_i(t)$. 
\end{lemma}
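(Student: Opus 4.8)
The plan is to exploit that the passage from B-spline coefficients to fitted curves is an invertible linear map, under which the model-based notion of outlyingness is preserved. First I would record the deterministic identity $\hat{Y}_i(t)=\mathbf{B}(t)\mathbf{b}_i$, so that $Y_i(t)=\hat{Y}_i(t)+e_i$. Because the $K+4$ cubic B-spline basis functions are linearly independent on $\mathcal{T}$, the Gram matrix $\mathbf{G}=\int_{\mathcal{T}}\mathbf{B}(t)^\top\mathbf{B}(t)\,dt$ is symmetric positive definite, and hence $\mathbf{b}\mapsto\mathbf{B}(\cdot)\mathbf{b}$ is a linear isomorphism from $\mathbb{R}^{K+4}$ onto the basis span $\mathcal{H}_B\subset L^2(\mathcal{T})$. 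This is the structural fact that makes the two representations interchangeable.

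Next I would translate separation in function space into separation in coefficient space. For any two fitted curves, $\|\hat{Y}_i-\hat{Y}_{i'}\|_{L^2}^2=(\mathbf{b}_i-\mathbf{b}_{i'})^\top\mathbf{G}(\mathbf{b}_i-\mathbf{b}_{i'})$, a positive-definite quadratic form in the coefficient difference; taking $\mathbf{b}_{i'}=\bar{\mathbf{b}}_h$ shows that the $L^2$ deviation of $\hat{Y}_i$ from the cluster-mean curve is pinched between multiples of $\|\mathbf{b}_i-\bar{\mathbf{b}}_h\|^2$ by the extreme eigenvalues of $\mathbf{G}$. The central observation is then that outlyingness of $\mathbf{b}_i$ is defined through the fitted Gaussian mixture, equivalently through the Mahalanobis distance $(\mathbf{b}_i-\vecmu_h)^\top(\vecSigma_h+\sigma^2(\mathbf{B}^\top\mathbf{B})^{-1})^{-1}(\mathbf{b}_i-\vecmu_h)$ that drives the subset-log-likelihood difference in Lemma~\ref{theo:fun}, and that this distance is invariant under the invertible map $\mathbf{b}\mapsto\mathbf{B}(\cdot)\mathbf{b}$. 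Consequently the induced Gaussian mixture on $\mathcal{H}_B$ assigns to $\hat{Y}_i$ exactly the same Mahalanobis outlyingness as the coefficient mixture assigns to $\mathbf{b}_i$, so $\hat{Y}_i$ is an outlying curve precisely when $\mathbf{b}_i$ is an outlying vector.

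Finally I would account for the residual $e_i=Y_i-\hat{Y}_i$. Since $e_i$ reflects the common noise level $\sigma^2$ rather than cluster-level structure, it contributes comparable fluctuation to every observation and cannot cancel the systematic displacement that makes $\hat{Y}_i$ atypical; hence the outlyingness of $\hat{Y}_i$ is inherited by $Y_i=\hat{Y}_i+e_i$, which completes the implication. I expect the main obstacle to lie in this last step and in reconciling the two outlyingness measures: the Mahalanobis invariance of the previous paragraph is exact for the fitted component, but if functional outlyingness is read off an $L^2$-type discrepancy then the transfer is only up to the eigenvalue bounds on $\mathbf{G}$, so one must argue that a coefficient outlier is extreme enough to survive those constants and that the residual does not reclassify it. The linear-isomorphism and invariance steps are purely algebraic and should go through without difficulty.
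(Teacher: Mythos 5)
Your proposal is correct in substance and reaches the same pivot as the paper --- the subset log-likelihood difference is driven by the Mahalanobis distance of $\mathbf{b}_i$, and the linearity of $\mathbf{b}\mapsto\mathbf{B}(\cdot)\mathbf{b}$ carries coefficient-space displacement to function-space displacement --- but you get there by a genuinely different route. The paper parametrizes a candidate outlier concretely as an affine perturbation of the mean coefficient vector, $\mathbf{b}^*=a\vecmu+\mathbf{c}$, shows $D_M(\mathbf{b}^*)>0$ whenever $[a-1]\vecmu+\mathbf{c}\neq\mathbf{0}$, and then computes $\mathbf{B}(t)\mathbf{b}^*=a\hat\mu(t)+\sum_k\mathbf{B}_k\mathbf{c}_k$ directly, so that the same $(a,\mathbf{c})$ that displaces the coefficients displaces the fitted curve from the mean function; this concreteness is what lets the authors immediately map the choices of $a$ and $\mathbf{c}$ onto the isolated/shift/amplitude/shape taxonomy in the remark that follows. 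Your argument instead works with the linear isomorphism onto the basis span, the Gram matrix $\mathbf{G}$, and the invariance of Mahalanobis distance under invertible linear maps, which is more general (it does not restrict attention to affine perturbations of the mean) and more explicit about how $L^2$ discrepancy and Euclidean coefficient discrepancy are equivalent up to the extreme eigenvalues of $\mathbf{G}$. Neither proof formally defines what it means for a function to be ``outlying,'' so both are heuristic at the same point; the one step you add that the paper omits entirely is the treatment of the residual $e_i$, which you handle by an informal ``common noise cannot cancel systematic displacement'' argument --- acceptable here, since the paper itself silently absorbs the residuals into $\hat\mu(t)$. In short: same key idea, different and somewhat more abstract packaging; the paper's version buys the outlier taxonomy, yours buys generality and an explicit metric correspondence.
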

\begin{proof}
	
	Let $\mathbf{b}\sim N(\boldsymbol{\mu},\boldsymbol{\Sigma^*})$, where $\boldsymbol{\mu}$ is the mean coefficient vector and $\boldsymbol{\Sigma^*} = \vecSigma+\sigma^2(\mathbf{B}^\top\mathbf{B})^{-1} $ is the covariance matrix. Let $a\in\mathbb{R}$ be a scalar factor and $\mathbf{c}\in\mathbb{R}^{K+4}$ be a vector. Define $\mathbf{b}^*=a\vecmu+\mathbf{c}$ as a candidate outlier obtained by a linear transformation of the mean coefficient vector $\vecmu$. Its  Mahalanobis squared distance (MSD) is
	\begin{equation} D_M(\mathbf{b^*})=(a\vecmu+\mathbf{c}-\vecmu)^{\top}(\boldsymbol{\Sigma}^*)^{-1}(a\vecmu+\mathbf{c}-\vecmu)=([a-1]\vecmu+\mathbf{c})^{\top}(\boldsymbol{\Sigma}^*)^{-1}([a-1]\vecmu+\mathbf{c}).
	\end{equation}
	
	Because $(\boldsymbol{\Sigma}^*)^{-1}$ is positive definite, we have 
	$D_M(\mathbf{b}^*) > 0$ whenever  $[a - 1]\vecmu + \mathbf{c} \neq \mathbf{0}$.  Thus, any nontrivial transformation of $\vecmu$ increases the MSD. 
	
	This transformation carries over to the functional representation:
	\begin{equation*}
		\mathbf{B}(t)\mathbf{b}^*=\mathbf B(t)(a\boldsymbol{\mu}+\mathbf c) = a\mu(t)+ \mathbf B(t)\mathbf c,
	\end{equation*}
	where  $\mu(t)=\mathbf B(t)\boldsymbol{\mu}$ denotes the mean function. Hence, deviations of $\mathbf{b}^*$ from $\vecmu$ correspond directly to deviations of the associated function from the mean function.
	
	Because $l_{\mathcal{X} \setminus \mathbf{b}^*}-l_{\mathcal{X}} \propto D_M(\mathbf{b}^*),$  observations with the largest MSD produce the greatest improvement when removed. Therefore, coefficient vectors that are outlying in the multivariate sense correspond to functions that are outlying in the functional space.
\end{proof}

Multivariate outliers look unusual with respect to the cluster structure (mild outliers), or unusual with respect to the whole dataset (gross outliers) \citep[][pp.~79--80]{ritter14}.  In both of these cases, outliers deviate from the typical cluster structure and are associated with comparatively large MSDs computed with respect to their cluster means and covariance structures.

Relating this back to the functional space and the taxonomy of \cite{hubert15}, if a coefficient $\mathbf{b}_i$ is unusual in a single coefficient, this would affect only a narrow region, creating an isolated outlier. Alternatively, if each coefficient is increased or decreased by a similar amount, this would create shift outliers by moving the entire curve up or down. Amplitude outliers occur if $a\neq 1, a>0$, with $0\leq a<1$ having a dampening effect and $a>1$ having an amplifying effect. Finally, combinations of the above transformations would create shape outliers, including where $a<0$. 

\subsection{funOCLUST Algorithm}

The funOCLUST algorithm proceeds in two steps: filtering and then clustering. First, the functions $Y_1(t), \ldots, Y_n(t)$ are filtered by decomposing them into linear combinations of cubic B-splines. In the second step, the funOCLUST algorithm clusters the fitted coefficients $\mathbf{b}_1,\ldots,\mathbf{b}_n$ and detects outliers. In each iteration of the second step, subsets of the coefficients are clustered and their corresponding log-likelihood recorded. The distribution of these subset log-likelihoods is measured against the theoretical distribution with the KL divergence. The candidate outlier is then removed before the next iteration begins. The model is free of outliers when KL is minimized. Formally, the funOCLUST algorithm is given in Algorithm~\ref{alg:funoclust}. 
\begin{algorithm}[!htb]
	\caption{funOCLUST algorithm}\label{alg:funoclust}
	\begin{algorithmic}[1]
		\Procedure{funOCLUST}{$\mathbf{t},\{Y_i(\mathbf{t})\}_{i \in [1,n]}, \{\xi_k\}_{k \in [0,K+1]},G,F$}
		\State \begin{varwidth}[t]{\linewidth} Generate a cubic B-spline basis, $\mathbf{B}$, defined by knots $\{\xi_k\}_{k \in [0,K+1]}$ and \par 
			evaluated at $\mathbf{t}$. 
		\end{varwidth}\newline
		\State\begin{varwidth}[t]{\linewidth} Use OLS regression to fit the coefficients for each function, \par 
			i.e., $\mathbf{b}_i=(\mathbf{B}^\top\mathbf{B})^{-1}\mathbf{B}^\top\mathbf{Y}_i$. 
		\end{varwidth}\newline
		\State \begin{varwidth}[t]{\linewidth} Run the OCLUST algorithm, available in the \texttt{oclust} package \par 
			\citep{clark25} for \textsf{R} \citep{R25}, with \par 
			$\mathcal{X}=\{\mathbf{b}_1, \ldots, \mathbf{b}_n\}$, $G$ clusters, and $F$ maximum outliers. 
		\end{varwidth}\newline
		\EndProcedure
	\end{algorithmic}
\end{algorithm}

\begin{remark}
	The choice of $F$ is user-specified and denotes the maximum number of points removed during the outlier search. In the absence of domain knowledge, $F/2$ is a conservative choice and represents the scenario where up to 50\% of functions could be candidate outliers.
\end{remark}

\begin{remark}
	The basis is assumed to be specified in advance. In practice, it may be selected using model selection strategies such as cross-validation. In Section~\ref{sec:funsim}, we employ leave-one-out cross-validation (LOOCV), fitting the basis using $t_j-1$ time points and predicting the held-out observation. This procedure is repeated over all time points and all functions, and the number of knots is chosen to minimize the resulting prediction error.
\end{remark}

\section{Simulation Study} \label{sec:funsim}
\subsection{Setup}
In this simulation study, we  compare the funOCLUST algorithm to five competitors:
funHDDC;
T-funHDDC;
a robust curve clustering approach called tkmeans \citep{garcia05};
\cite{james03}'s approach, which we call mocca due to its {\sf R} implementation; and functional outlyingness \citep[fOutl;][]{hubert15,mrfDepth}.

The mocca, funHDDC and {T-funHDDC} methods are functional clustering algorithms, with funHDDC and T-funHDDC specializing in high-dimensional cases. Both tkmeans and funOCLUST cluster and detect outliers simultaneously, while fOutl detects outliers only and does not cluster the functional data. These five methods and funOCLUST are applied to 100 generated datasets under 8 scenarios as a fractional factorial design testing the number of clusters (2 vs.\ 5), functional complexity (moderate vs.\ high), sampling sparsity (dense vs.\ sparse), and outlier generation mechanism (shift-scale vs.\ multivariate-t). The contamination level is fixed at 20\% for the shift-scale outliers, while the multivariate-t contamination inherently induces heavy-tailed deviations. Each scenario is described in Table~\ref{simscen}.

\begin{table}[ht]
	\centering
	\caption{Descriptions of each simulation scenario with respect to the number of clusters, functional complexity, observation sparsity, and outlier type.}\label{simscen}
	\begin{tabular}{cccccc}
		\hline
		scenario & clusters & complexity & sparsity & outlier type & contamination \\ 
		\hline
		1 & 2 & moderate & dense & heavy-tail & \\ 
		2 & 5 & moderate & dense & shift-scale & 0.20 \\ 
		3 & 2 & high & dense & shift-scale & 0.20 \\ 
		4 & 5 & high & dense & heavy-tail  &  \\ 
		5 & 2 & moderate & sparse & shift-scale & 0.20 \\ 
		6 & 5 & moderate & sparse & heavy-tail  & \\ 
		7 & 2 & high & sparse & heavy-tail  & \\ 
		8 & 5 & high & sparse & shift-scale & 0.20 \\ 
		\hline
	\end{tabular}
	
\end{table}

In each replicate, 300 curves are simulated according to the following equation:
\begin{equation}
	x_{i,g}(t) = \mu_g(t) + \varepsilon(t),
\end{equation}
for $t$ on an equally-spaced grid of size 80 between 0 and 1. In the moderate complexity scenario, the mean functions are:
\begin{gather*}
	\mu_1(t) = 2t + 0.5, \quad	\mu_2(t) =-1.5(t-0.5)^2 + 1, \quad \mu_3(t) =\exp\{-3t\}, \\ \mu_4(t) =\frac{1}{1 + \exp\{-10(t-0.5)\}}, \quad
	\mu_5(t) = \max(0, 2t - 1),
\end{gather*}
representing linear, quadratic, exponential, sigmoid, and piecewise curves, respectively. In the high complexity scenario, the mean functions are:
\begin{gather*}
	\mu_1(t) = 0.5t + \exp\{-(t-0.3)^2/0.01\}, \quad	\mu_2(t) =-t^2+1+ \exp\{-(t-0.7)^2/0.02\},\\
	\mu_3(t) =\exp\{-2t\} + 0.3\sin(4\pi t), \quad \mu_4(t) =\frac{1}{1 + \exp\{-10(t-0.5)\}} + \exp\{-(t-0.4)^2/0.01\} ,\\
	\mu_5(t) = \max(0, 1.5t - 0.5)+\exp\{-(t-0.6)^2/0.015\},
\end{gather*}
which are also linear, quadratic, exponential, sigmoid, and piecewise, but with added bumps and periodicity. Figure~\ref{fig:meanfuncs} plots these mean curves. 

\begin{figure}[!ht]
	\includegraphics[width=\textwidth]{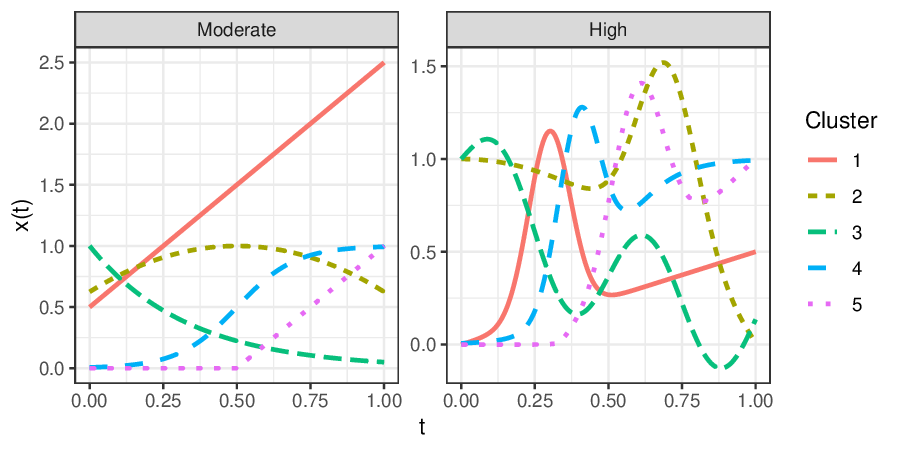}
	\caption{Mean functions for the simulated data.}\label{fig:meanfuncs}
\end{figure}

To induce sparsity, time points are missing at random. For each relevant dataset, the number of missing time points is itself randomly drawn from a discrete uniform distribution between 30 and 50, resulting in approximately 50\% missingness on average.

Finally, outliers are introduced in two ways. The shift-scale type is created by randomly selecting 20\% of the curves and scaling them by a random factor generated uniformly in $1.5<a<2.5$ and shifting them by a similar factor $-2 < b < 2$, i.e., \newline $x^*_i  = ax_i(t) + b$. For the shift-scale type, $\varepsilon_i(t) \sim N(\vecmu = \mathbf{0}, \vecSigma = \mathbf{I}_{t_j})$. The second type introduces outliers by generating the functions with a heavy-tailed error such that $\varepsilon_i(t) \sim T(\vecmu = \mathbf{0}, \vecSigma = \mathbf{I}_{t_j}, \nu = 10)$.  A two-cluster example of both types of generation scenario is shown in Figure~\ref{fig:datex}
\begin{figure}[!ht]
	\includegraphics[width=\textwidth]{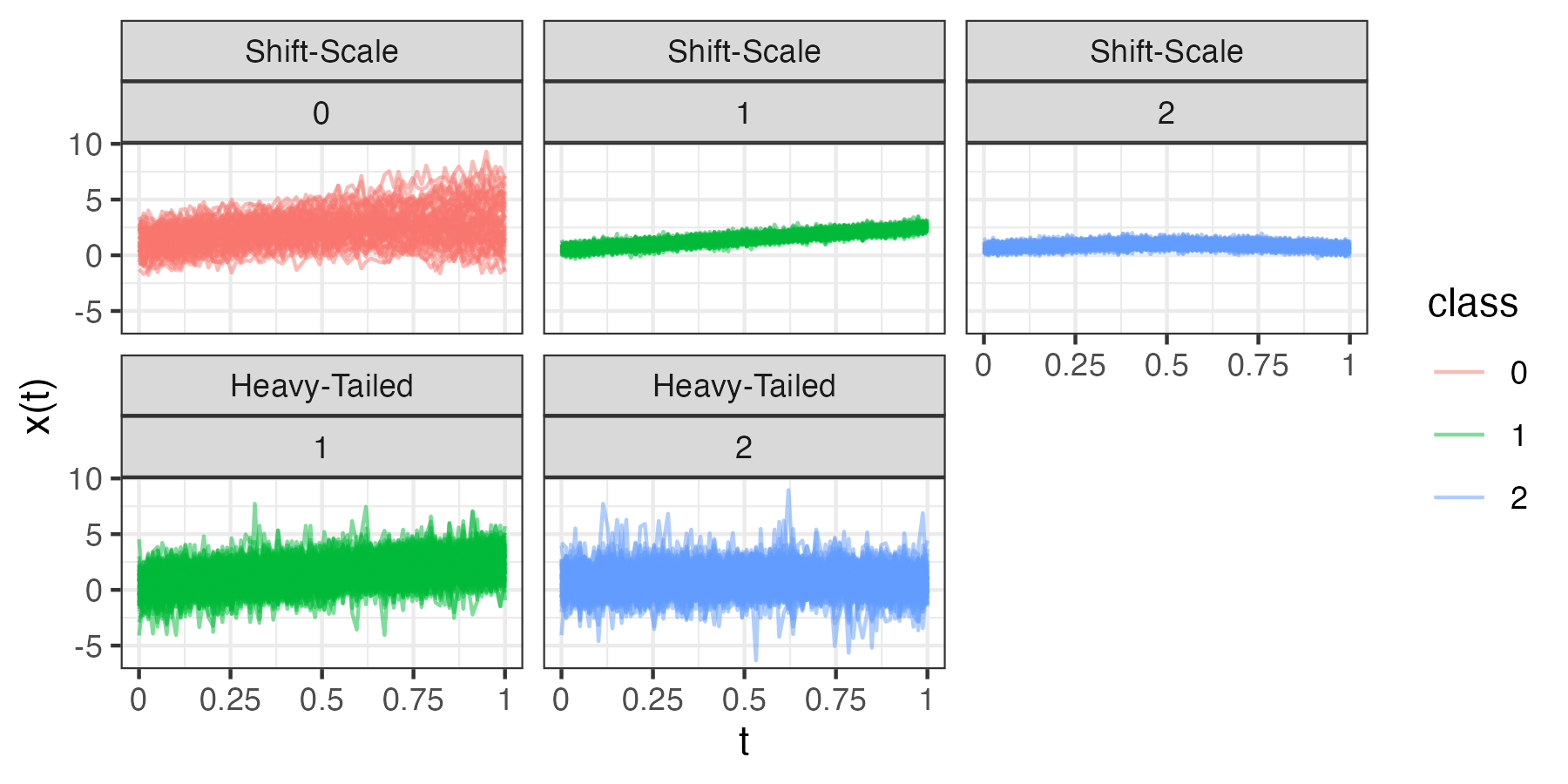}
	\caption{Examples of both types of outlier scenario on a dataset with two clusters, moderate difficulty, and dense observations.}\label{fig:datex}
\end{figure}

In each run, a cubic B-spline basis is chosen with the number of knots selected by LOOCV to minimize the mean squared error. This basis is then used for all methods apart from fOutl, which does not require it. The funOCLUST approach is run with chosen upper bound $F$ of 150, representing 50\% of the curves. For tkmeans, the basis coefficients are clustered according to a trimmed $k$-means algorithm, using the \texttt{tclust} package \citep{fritz12} for {\sf R}. To choose the trimming percentage, the \texttt{ctlcurves} function is used, and $\alpha$ is chosen where the second derivative of the objective function is minimized. The funHDDC and {T-funHDDC} algorithms are run with the {\sf R} packages \texttt{funHDDC} \citep{schmutz21} and \texttt{TFunHDDC} \citep{anton23}, respectively, each with 20 random $k$-means starts, all models, and with the Catell-scree threshold  in \{0.05, 0.1, 0.2, 0.6\}. The BIC selects the optimal model. The mocca algorithm is run with the \texttt{fdaMocca} package, setting the number of knots consistent with the previous algorithms. The fOutl method is implemented with the \texttt{fOutl} function in the \texttt{mrfDepth} package \citep{mrfDepth}, using adjusted outlyingness with Stahel-Donoho outlyingness \citep{stahel81, donoho82} as the distance option. 

\subsection{Clustering Results}
Each clustering algorithm is evaluated using the adjusted Rand index \citep[ARI;][]{hubert85}. The ARI compares two partitions --- in this case, real and predicted classes. The ARI equals one under perfect classification and has expected value zero under random class assignment. For funOCLUST and tkmeans, points identified as ``outliers" do not have class labels.  To compare these methods to their competitors, we reclassify the points according to their final models, with tkmeans assigning outliers to the nearest cluster centre and funOCLUST assigning outliers to the one which produces the largest posterior probability. The ARIs for each scenario are shown in Figure~\ref{fig:ARIs}. 
\begin{figure}[!ht]
	\includegraphics[width=\textwidth]{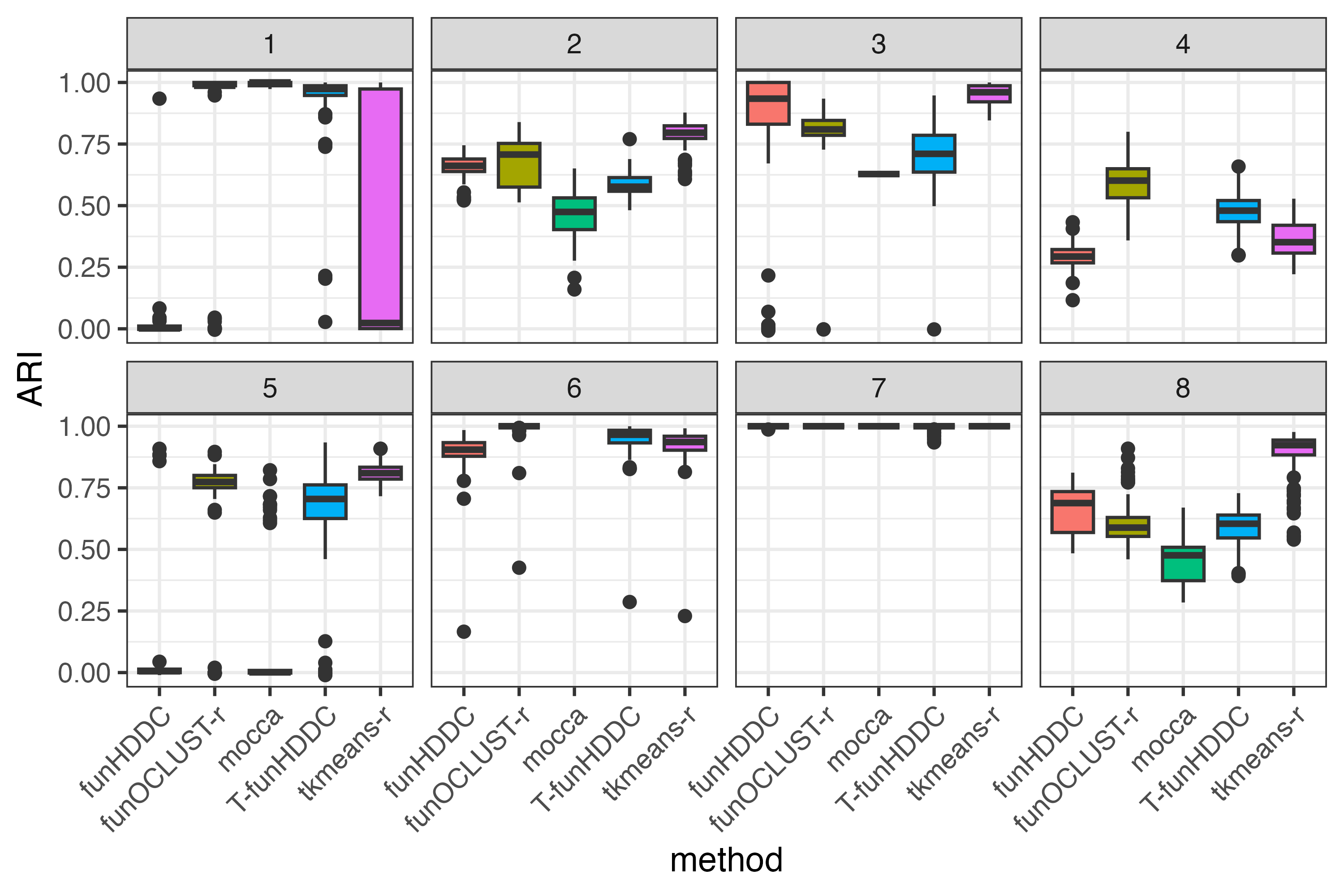}
	\caption{Boxplots of ARI for each simulation scenario. 	The indicator `-r' indicates that ARI is calculated after outliers are reclassified. }\label{fig:ARIs}
\end{figure}

The funOCLUST algorithm performs very well in scenarios 1, 4, 6, and 7, corresponding to when the errors have heavy tails. In the shift-scale scenarios tkmeans dominates, while funOCLUST produces similar results to its other competitors. This suggests that funOCLUST is better suited for heavy-tailed errors but can return good, consistent results in a variety of situations. 

The mocca algorithm performs among the best in scenarios 1 and 7, but underperforms everywhere else, including in scenarios 4 and 6 where it is unable to fit a solution. It seems to be suited to the two-cluster case with heavy-tail errors. Unsurprisingly, T-funHDDC is more robust in the heavy-tailed scenarios because it clusters with the multivariate-t distribution. 

Finally, funOCLUST, tkmeans, and fOutl are evaluated on their outlier identification accuracy. We evaluate only scenarios with shift-scale outliers because the identities of the outlying functions are known. Functions are considered outliers by fOutl if their outlyingness exceeds the cutoff from the \texttt{fom} function in the \texttt{mrfDepth} package. 

Figure~\ref{fig:funerr} shows false positive rates (dark pink) and false negative rates (dark blue). All methods have very low false positive rates, indicating that they are conservative and rarely classify `good' functions as outliers. The funOCLUST algorithm has the lowest false negative rate in scenarios 2 and 5, where the functions are of moderate complexity. Notably, funOCLUST has a 51\% false negative rate in scenario 8, where there are five clusters and 20\% contamination, leading to each cluster only having 48 non-outlying curves. While the competing methods are non-parametric and therefore less sensitive to distributional assumptions, the combination of high complexity, sparse sampling, and limited per-cluster information makes outlier detection inherently difficult. Despite this, funOCLUST still delivers solid results with respect to ARI.

\begin{figure}[!ht]
	\centering
	\includegraphics[width=6in]{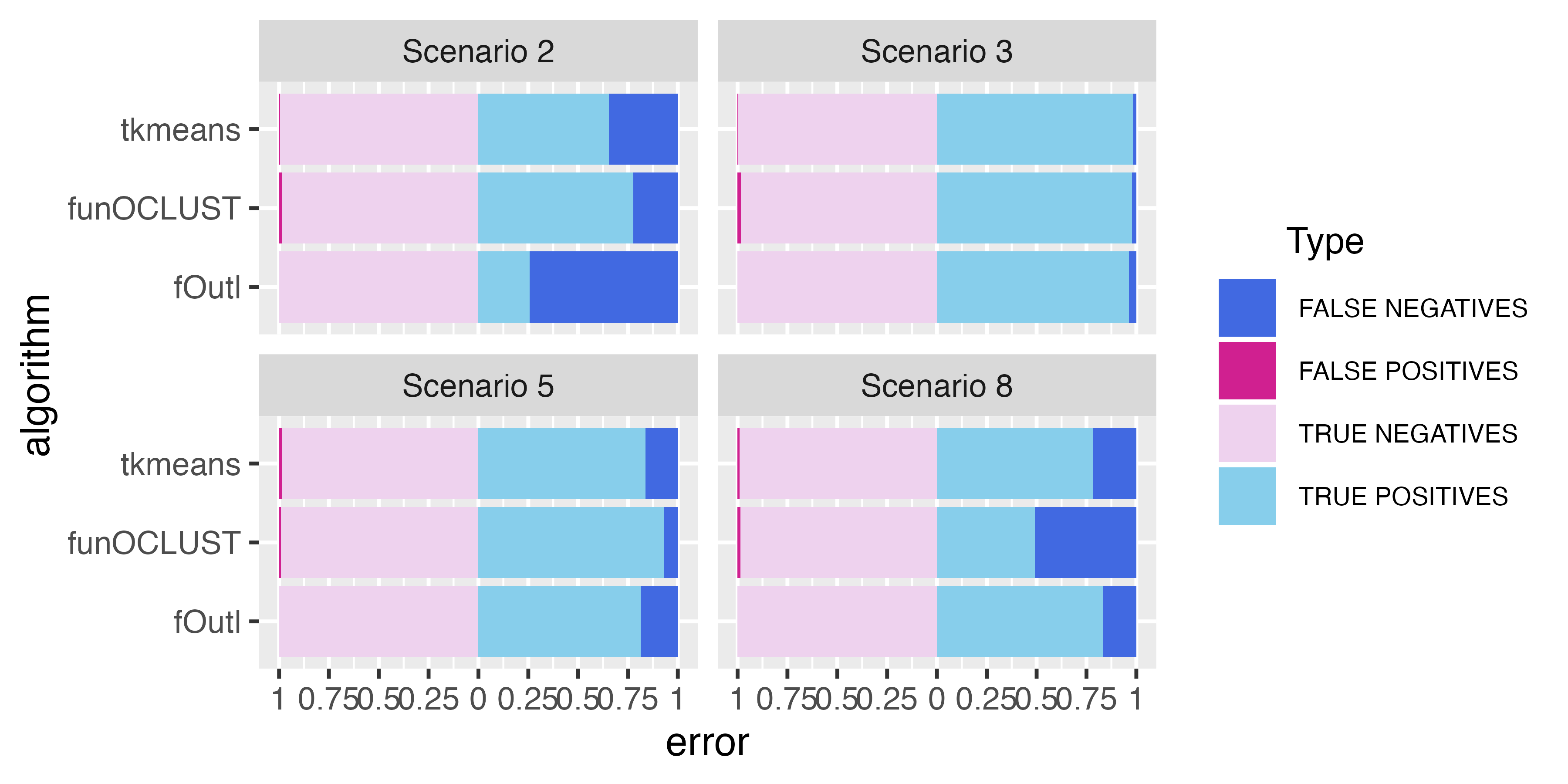}
	\caption{Outlier identification rates for funOCLUST, tkmeans, and fOutl on the simulated datasets with shift-scale outliers.}\label{fig:funerr}
\end{figure}

\subsection{Main Effects}

Due to the balanced nature of the design, we are able to isolate the effects of the number of clusters, functional complexity, sparsity, and outlier type on the clustering performance of funOCLUST and its competitors. A plot of mean ARI is presented in Figure~\ref{fig:maineffects}. As expected, funOCLUST performs best in settings with fewer clusters and moderate functional complexity. However, when the algorithms are considered together, they achieve better performance on more complex datasets. Similar to what is seen in Figure~\ref{fig:ARIs}, funOCLUST attains the best results when outliers are generated with a heavy-tailed distribution. Interestingly, the algorithms as a whole perform better when time points are sparsely sampled. One possible explanation is that the signal may be less obscured by noise. Importantly, funOCLUST achieves the largest mean ARI across scenarios, showing its robustness in a variety of situations.

\begin{figure}[!ht]
	\centering
	\includegraphics[width=4in]{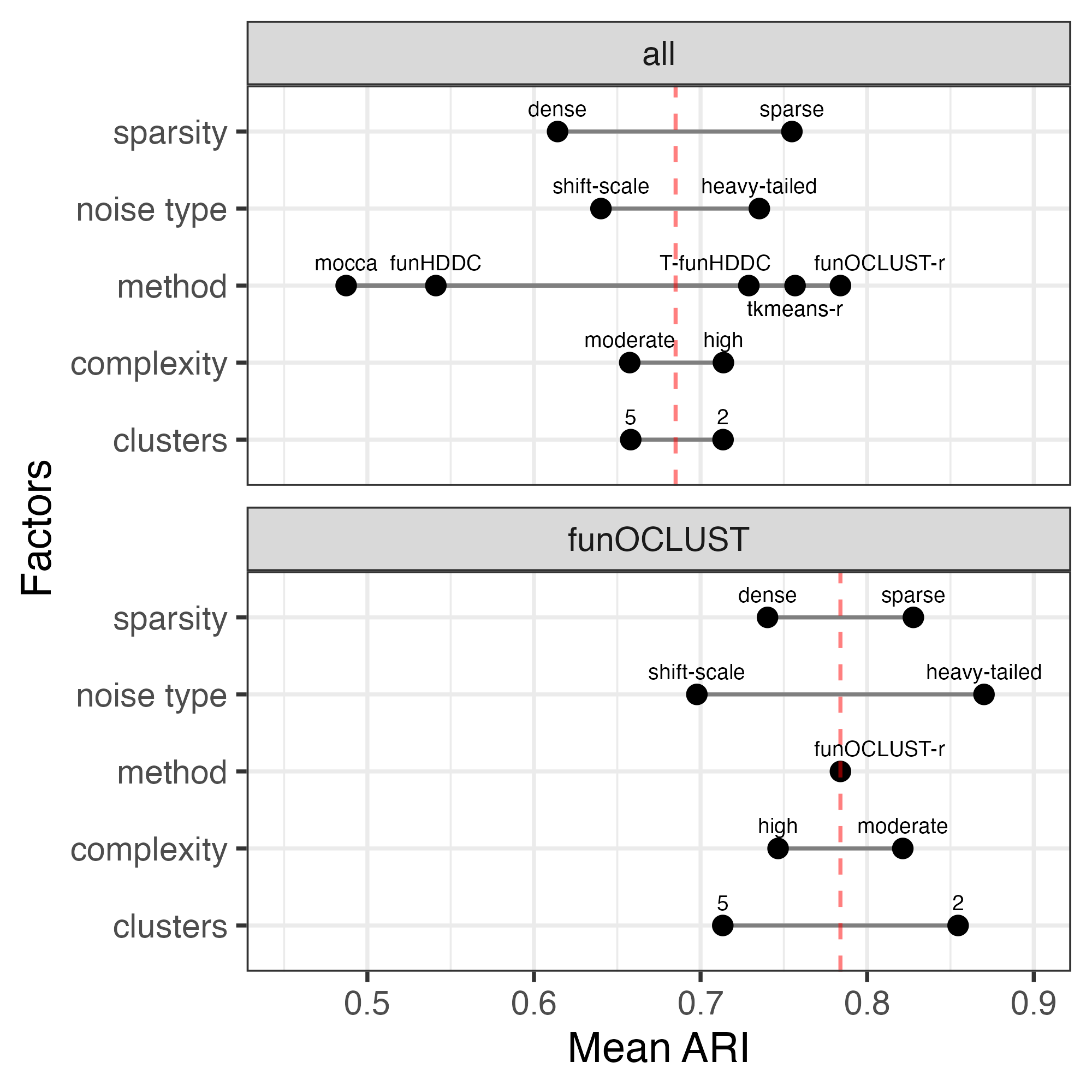}
	\caption{Main effect plot showing the mean ARI for each level of factor. The indicator `-r' indicates that ARI is calculated after outliers are reclassified. }\label{fig:maineffects}
\end{figure}

\section{Real Data Examples}\label{sec:funreal}

\subsection{Pedestrian Traffic}
Consider real data on pedestrian traffic in Melbourne, Australia, where sensors installed around the city capture the number of pedestrians passing by each hour. The data are available on \texttt{kaggle.com} \citep{chinatown} for all sensors between the years 2009 and 2022.  Data captured for the year 2017 from the Chinatown--Swanston St.\ (North) sensor are plotted in Figure~\ref{fig:chinatowntrue}, with observations coloured according to whether the data correspond to a workday or non-workday (i.e., a weekend or public holiday). There are 365 observations, each with values collected at 24 time points. Of the 365 days captured, 249 are workdays and 116 are not. There is one missing value on October 1st between the hours of 2am and 3am. 
\begin{figure}[!htb]
	\includegraphics[width=\textwidth]{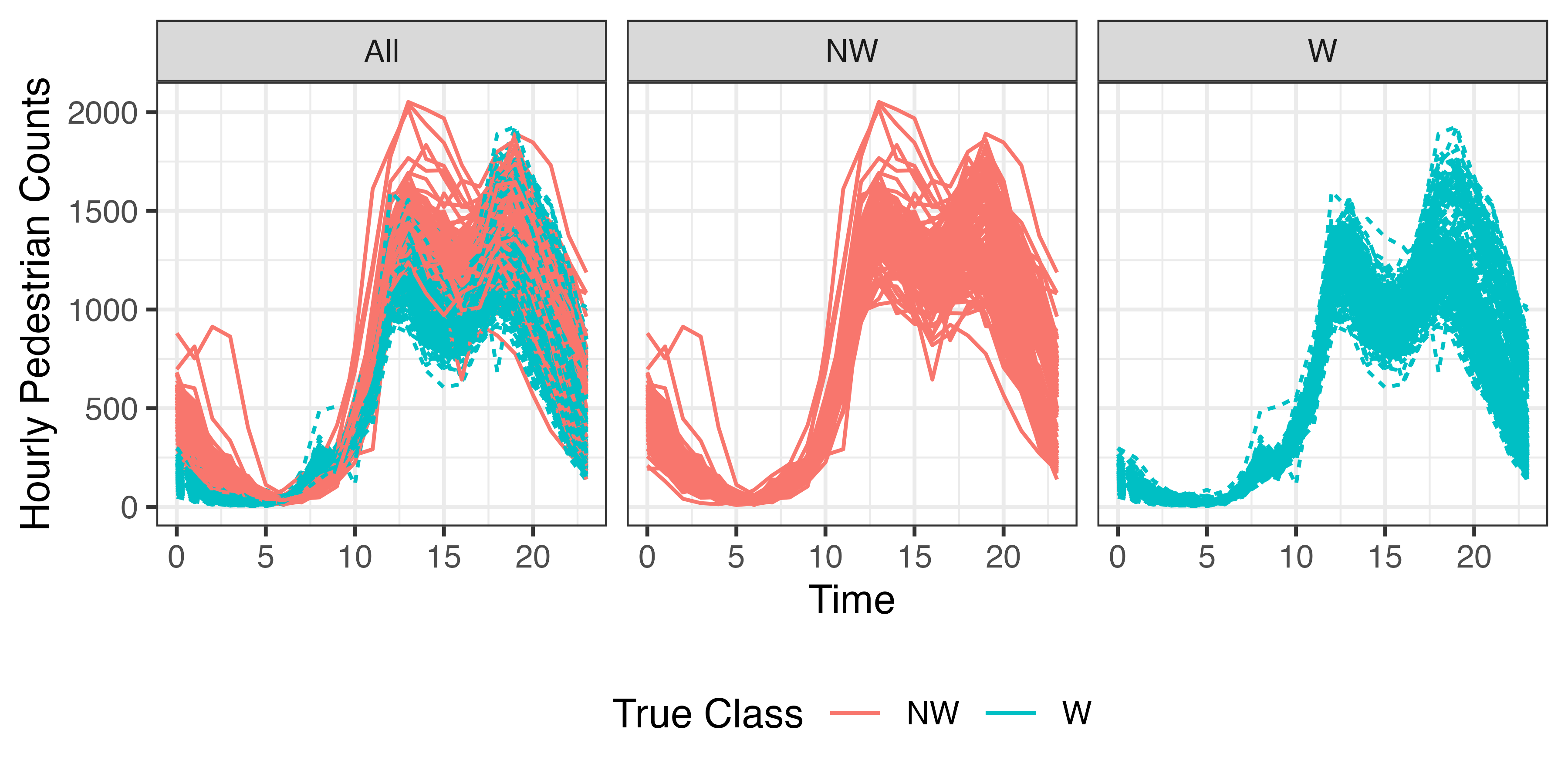}
	\caption{Pedestrian traffic recorded hourly for one year in Chinatown, Melbourne. Curves are coloured according to whether each day is a workday (W) or non-workday (NW). }\label{fig:chinatowntrue}
\end{figure}

The funOCLUST, funHDDC, T-funHDDC, and tkmeans algorithms are run on these data. The optimal number of cubic B-spline bases is determined to be 12 by LOOCV.  For funOCLUST, the upper bound $F$ is set to 50. The funHDDC and T-funHDDC methods are unable to cluster functions with missing values, so the missing value is imputed as the mean of all functions at that same time of day. The clustering results are shown in Table~\ref{tab:chinatowncomp} as confusion matrices.   
\begin{table}[!htb]
	\caption{Classification results for funHDDC, funOCLUST, {T-funHDDC}, and tkmeans on the Chinatown dataset.}\label{tab:chinatowncomp}
	\centering
	\begin{tabular}{@{\extracolsep{\fill}}llcccccccccccc}
		\hline 
		&   \multicolumn{2}{c}{funHDDC}&& \multicolumn{3}{c}{funOCLUST}&&\multicolumn{2}{c}{{T-funHDDC}}&&\multicolumn{3}{c}{tkmeans}\\ 
		\cmidrule{2-3}\cmidrule{5-7}\cmidrule{9-10}\cmidrule{12-14}
		{Channel}&1&2&&1&2&bad&&1&2&&1&2&bad\\
		\hline 
		Workday &175&13&&244&0&5&&2&47&&189&48&0\\ 
		Non-Workday &74&103&&0&99&17&&114&202&&60&58&10\\ 
		\hline
	\end{tabular}	
	
\end{table} 

After removing 22 outliers, funOCLUST is able to recover the day type without errors, performing notably better than the other methods considered. Included in the list of outliers are January~1 (New Year's Day), December~25 (Christmas Day),  December~26 (Boxing Day), and January~28--29 (Chinese New Year). The funOCLUST solution is plotted in Figure~\ref{fig:chinatownpred}. Many of the outlying observations have higher-than-average traffic in the afternoon (noon to 5pm) or early morning (midnight to 4am). Additionally, these outliers often exhibit unusual combinations of traffic patterns. For example, traffic volume around noon and 7pm may resemble a typical workday, while the volume around 4pm is more similar to what is observed on a weekend or holiday. Because their unusual shape occurs over broad portions of the curve rather than within a narrow domain, they are classified as persistent outliers in the functional outlier taxonomy.

\begin{figure}[!htb]
	\includegraphics[width=\textwidth]{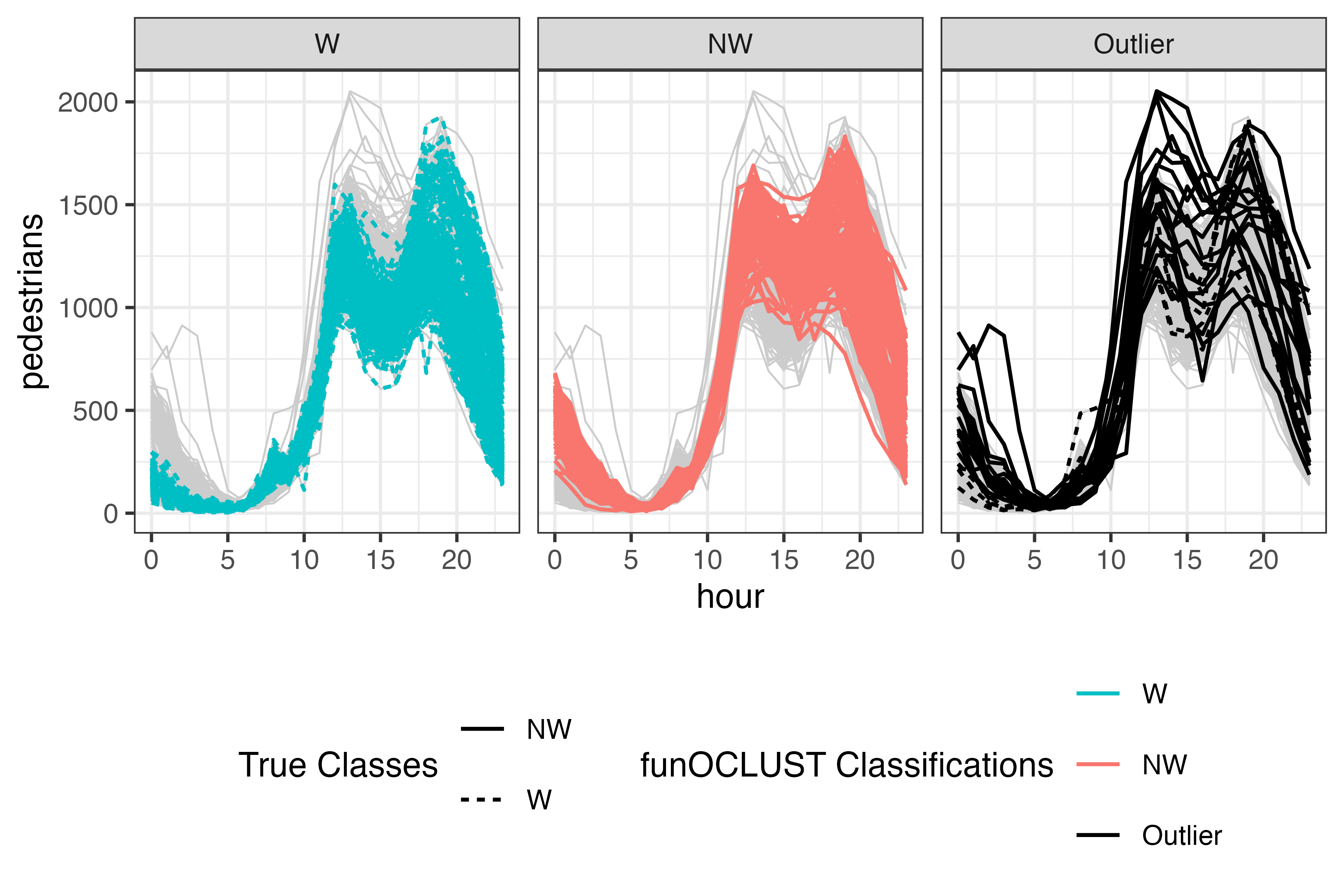}
	\caption{Pedestrian traffic recorded hourly for one year in Chinatown, Melbourne, where curves are coloured according to the funOCLUST solution.}\label{fig:chinatownpred}
\end{figure}

\subsection{NOx Benchmark}

Finally, we apply funOCLUST to the NO$_\text{x}$ data as a benchmark dataset. This dataset has been widely used in the functional model-based clustering literature, including by \cite{smith22, anton23} and \cite{rivera19}, providing a natural basis for comparison with existing methods. 

First used in \cite{febrero08}, the data reflect hourly measurements of nitric oxide (NO) and nitrogen dioxide (NO$_2$)  in the air over 115 days in Barcelona, Spain during 2005. Curves are separated by day with 76 workdays and 39 non-workdays. The raw data are plotted in Figure~\ref{fig:poblenou}, with the group-wise hourly mean plotted on top.

\begin{figure}[!htb]
	\includegraphics[width=\textwidth]{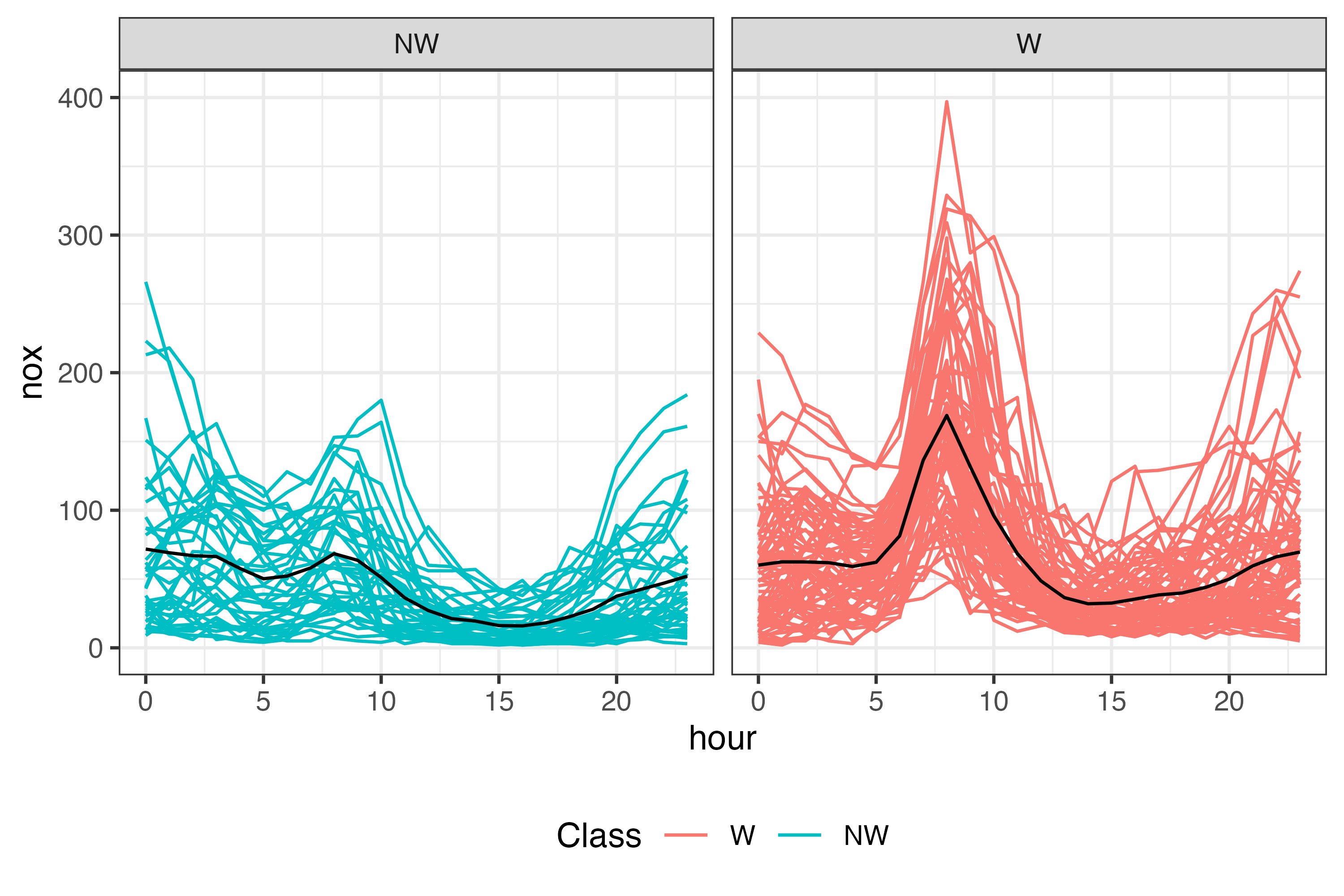}
	\caption{NO$_\text{x}$ concentration in Barcelona, Spain in 2005. Curves are coloured according to whether each day is a workday (W) or non-workday (NW). Solid black curves indicate the group-wise hourly mean.}\label{fig:poblenou}
\end{figure}

The funOCLUST method is run with all Gaussian parsimonious clustering models, two clusters and $F=57$ maximum outliers. The correct classification rates (CCR) are shown in Table~\ref{tab:benchmark}, alongside the values reported in the corresponding methodological papers. The ``EEE" covariance structure produces the best CCR (0.86), corresponding to clusters with equal volumes, shapes, and orientations. Furthermore, funOCLUST's range of CCR from 0.51 to 0.86 places its best-performing models on par with existing approaches while demonstrating the importance of the choice of covariance structure.

\begin{table}[!htb]
	\centering
	\caption{Self-reported CCR across methods.}
	\begin{tabular}{lcc}
		\hline
		Method & Model & CCR \\
		\hline
		T-FunHDDC & &0.52 -- 0.91 \\
		C-FunHDDC    & &0.60 -- 0.70 \\
		RFC          & &0.66 -- 0.85 \\
		funOCLUST &  & 0.51 -- 0.86 \\
		& EII & 0.68 \\
		&VII & 0.76 \\
		&EEI & 0.65 \\
		&VEI & 0.78 \\
		&EVI & 0.69 \\
		&VVI & 0.72 \\
		&EEE & \textbf{0.86} \\
		&VEE & 0.83 \\
		&EVE & 0.54 \\
		&VVE & 0.78 \\
		&EEV & 0.51 \\
		&VEV & 0.55 \\
		&EVV & 0.57 \\
		&VVV & 0.65 \\
		\hline
	\end{tabular}
	\label{tab:benchmark}
\end{table}

\section{Discussion}
The OCLUST algorithm has been extended to functional data. Using an assumption that cubic B-spline coefficients are multivariate normally distributed, the distribution of subset log-likelihoods was derived. This is the basis for the funOCLUST algorithm, which proceeds in two stages. First, functions are transformed using a cubic spline basis, and then the OCLUST algorithm is run, removing candidate outliers one-by-one until the target distribution is reached. 

The funOCLUST algorithm showed good, consistent classification performance across a variety of simulation scenarios, achieving the largest mean ARI across scenarios. When applied to a real dataset on pedestrian traffic in Melbourne, funOCLUST identified certain significant dates as outliers and was able to recover whether the data came from a workday or weekend/holiday. On the NO$_\text{x}$ benchmark, the proposed method produced results on par with its comparitors. 

The funOCLUST algorithm marks the first extension to OCLUST to functional data. This paves the way for future two-stage extensions, including to multivariate and skewed functional data, and for an OCLUST in the model-based paradigm, where the functional decomposition is estimated within the expectation-maximization algorithm, allowing the model to adapt as outliers are removed. 

\bibliographystyle{chicago}

\begin{thebibliography}{}
	
	\bibitem[\protect\citeauthoryear{Aadimator}{Aadimator}{2022}]{chinatown}
	Aadimator (2022).
	\newblock Melbourne pedestrian counting system - monthly.
	\newblock
	\url{https://www.kaggle.com/datasets/aadimator/melbourne-pedestrian-counting-system-monthly?resource=download}.
	\newblock Accessed: 2024-07-15.
	
	\bibitem[\protect\citeauthoryear{Abraham, Cornillon, Matzner-L{\o}ber, and
		Molinari}{Abraham et~al.}{2003}]{abraham03}
	Abraham, C., P.-A. Cornillon, E.~Matzner-L{\o}ber, and N.~Molinari (2003).
	\newblock Unsupervised curve clustering using b-splines.
	\newblock {\em Scandinavian journal of statistics\/}~{\em 30\/}(3), 581--595.
	
	\bibitem[\protect\citeauthoryear{Amovin-Assagba, Gannaz, and
		Jacques}{Amovin-Assagba et~al.}{2022}]{amovin22}
	Amovin-Assagba, M., I.~Gannaz, and J.~Jacques (2022).
	\newblock Outlier detection in multivariate functional data through a
	contaminated mixture model.
	\newblock {\em Computational Statistics \& Data Analysis\/}~{\em 174}, 107496.
	
	\bibitem[\protect\citeauthoryear{Anton and Smith}{Anton and
		Smith}{2022}]{smith22}
	Anton, C. and I.~Smith (2022).
	\newblock Model based clustering of functional data with mild outliers.
	\newblock In {\em Conference of the International Federation of Classification
		Societies}, pp.\  11--19. Springer.
	
	\bibitem[\protect\citeauthoryear{Anton and Smith}{Anton and
		Smith}{2023}]{anton23}
	Anton, C. and I.~Smith (2023).
	\newblock Model-based clustering of functional data via mixtures of t
	distributions.
	\newblock {\em Advances in Data Analysis and Classification\/}.
	
	\bibitem[\protect\citeauthoryear{Bouveyron and Jacques}{Bouveyron and
		Jacques}{2011}]{bouveyron11b}
	Bouveyron, C. and J.~Jacques (2011).
	\newblock Model-based clustering of time series in group-specific functional
	subspaces.
	\newblock {\em Advances in Data Analysis and Classification\/}~{\em 5},
	281--300.
	
	\bibitem[\protect\citeauthoryear{Clark and McNicholas}{Clark and
		McNicholas}{2024}]{clark24}
	Clark, K.~M. and P.~D. McNicholas (2024).
	\newblock Finding outliers in {G}aussian model-based clustering.
	\newblock {\em Journal of Classification\/}~{\em 41\/}(2), 313--337.
	
	\bibitem[\protect\citeauthoryear{Clark and McNicholas}{Clark and
		McNicholas}{2025}]{clark25}
	Clark, K.~M. and P.~D. McNicholas (2025).
	\newblock {\em oclust: Gaussian Model-Based Clustering with Outliers}.
	\newblock R package version 1.0.0.
	
	\bibitem[\protect\citeauthoryear{Dempster, Laird, and Rubin}{Dempster
		et~al.}{1977}]{dempster77}
	Dempster, A.~P., N.~M. Laird, and D.~B. Rubin (1977).
	\newblock Maximum likelihood from incomplete data via the {EM} algorithm.
	\newblock {\em Journal of the Royal Statistical Society: Series~B\/}~{\em
		39\/}(1), 1--38.
	
	\bibitem[\protect\citeauthoryear{Donoho}{Donoho}{1982}]{donoho82}
	Donoho, D.~L. (1982).
	\newblock Breakdown properties of multivariate location estimators.
	\newblock Technical report, Technical report, Harvard University, Boston.
	
	\bibitem[\protect\citeauthoryear{Febrero, Galeano, and
		Gonz{\'a}lez-Manteiga}{Febrero et~al.}{2008}]{febrero08}
	Febrero, M., P.~Galeano, and W.~Gonz{\'a}lez-Manteiga (2008).
	\newblock Outlier detection in functional data by depth measures, with
	application to identify abnormal nox levels.
	\newblock {\em Environmetrics: The official journal of the International
		Environmetrics Society\/}~{\em 19\/}(4), 331--345.
	
	\bibitem[\protect\citeauthoryear{Fritz, Garc\'{i}a-Escudero, and
		Mayo-Iscar}{Fritz et~al.}{2012}]{fritz12}
	Fritz, H., L.~A. Garc\'{i}a-Escudero, and A.~Mayo-Iscar (2012).
	\newblock {tclust}: An {R} package for a trimming approach to cluster analysis.
	\newblock {\em Journal of Statistical Software\/}~{\em 47\/}(12), 1--26.
	
	\bibitem[\protect\citeauthoryear{Garcia-Escudero and Gordaliza}{Garcia-Escudero
		and Gordaliza}{2005}]{garcia05}
	Garcia-Escudero, L.~A. and A.~Gordaliza (2005).
	\newblock A proposal for robust curve clustering.
	\newblock {\em Journal of Classification\/}~{\em 22\/}(2), 185--201.
	
	\bibitem[\protect\citeauthoryear{Hubert and Arabie}{Hubert and
		Arabie}{1985}]{hubert85}
	Hubert, L. and P.~Arabie (1985).
	\newblock Comparing partitions.
	\newblock {\em Journal of Classification\/}~{\em 2\/}(1), 193--218.
	
	\bibitem[\protect\citeauthoryear{Hubert, Rousseeuw, and Segaert}{Hubert
		et~al.}{2015}]{hubert15}
	Hubert, M., P.~J. Rousseeuw, and P.~Segaert (2015).
	\newblock Multivariate functional outlier detection.
	\newblock {\em Statistical Methods \& Applications\/}~{\em 24\/}(2), 177--202.
	
	\bibitem[\protect\citeauthoryear{Jacques and Preda}{Jacques and
		Preda}{2013}]{jacques13}
	Jacques, J. and C.~Preda (2013).
	\newblock Funclust: A curves clustering method using functional random
	variables density approximation.
	\newblock {\em Neurocomputing\/}~{\em 112}, 164--171.
	
	\bibitem[\protect\citeauthoryear{Jacques and Preda}{Jacques and
		Preda}{2014}]{jacques14}
	Jacques, J. and C.~Preda (2014).
	\newblock Functional data clustering: a survey.
	\newblock {\em Advances in Data Analysis and Classification\/}~{\em 8},
	231--255.
	
	\bibitem[\protect\citeauthoryear{James and Sugar}{James and
		Sugar}{2003}]{james03}
	James, G.~M. and C.~A. Sugar (2003).
	\newblock Clustering for sparsely sampled functional data.
	\newblock {\em Journal of the American Statistical Association\/}~{\em
		98\/}(462), 397--408.
	
	\bibitem[\protect\citeauthoryear{McNicholas}{McNicholas}{2016}]{mcnicholas16a}
	McNicholas, P.~D. (2016).
	\newblock {\em Mixture Model-Based Classification}.
	\newblock Boca Raton: Chapman \& Hall/CRC Press.
	
	\bibitem[\protect\citeauthoryear{Nguyen, Ullmann, McLachlan, Voleti, Li,
		Hillman, Reutens, and Janke}{Nguyen et~al.}{2018}]{nguyen18}
	Nguyen, H.~D., J.~F. Ullmann, G.~J. McLachlan, V.~Voleti, W.~Li, E.~M. Hillman,
	D.~C. Reutens, and A.~L. Janke (2018).
	\newblock Whole-volume clustering of time series data from zebrafish brain
	calcium images via mixture modeling.
	\newblock {\em Statistical Analysis and Data Mining: The ASA Data Science
		Journal\/}~{\em 11\/}(1), 5--16.
	
	\bibitem[\protect\citeauthoryear{Peng and M{\"u}ller}{Peng and
		M{\"u}ller}{2008}]{peng08}
	Peng, J. and H.-G. M{\"u}ller (2008).
	\newblock Distance-based clustering of sparsely observed stochastic processes,
	with applications to online auctions.
	\newblock {\em The Annals of Applied Statistics\/}~{\em 2\/}(3), 1056--1077.
	
	\bibitem[\protect\citeauthoryear{{R Core Team}}{{R Core Team}}{2025}]{R25}
	{R Core Team} (2025).
	\newblock {\em R: A Language and Environment for Statistical Computing}.
	\newblock Vienna, Austria: R Foundation for Statistical Computing.
	
	\bibitem[\protect\citeauthoryear{Ramsay and Silverman}{Ramsay and
		Silverman}{2005}]{ramsay05}
	Ramsay, J. and B.~W. Silverman (2005).
	\newblock {\em Functional Data Analysis\/} (2 ed.).
	\newblock New York, NY: Springer.
	
	\bibitem[\protect\citeauthoryear{Ritter}{Ritter}{2014}]{ritter14}
	Ritter, G. (2014).
	\newblock {\em Robust Cluster Analysis and Variable Selection}, Volume 137 of
	{\em Chapman \& Hall/CRC monographs on statistics \& applied probability},
	pp.\  79--80.
	\newblock London: Chapman and Hall/CRC.
	
	\bibitem[\protect\citeauthoryear{Rivera-Garc{\'\i}a, Garc{\'\i}a-Escudero,
		Mayo-Iscar, and Ortega}{Rivera-Garc{\'\i}a et~al.}{2019}]{rivera19}
	Rivera-Garc{\'\i}a, D., L.~A. Garc{\'\i}a-Escudero, A.~Mayo-Iscar, and
	J.~Ortega (2019).
	\newblock Robust clustering for functional data based on trimming and
	constraints.
	\newblock {\em Advances in Data Analysis and Classification\/}~{\em 13\/}(1),
	201--225.
	
	\bibitem[\protect\citeauthoryear{Schmutz and Bouveyron}{Schmutz and
		Bouveyron}{2021}]{schmutz21}
	Schmutz, A. and J.~J. .~C. Bouveyron (2021).
	\newblock {\em funHDDC: Univariate and Multivariate Model-Based Clustering
		inGroup-Specific Functional Subspaces}.
	\newblock R package version 2.3.1.
	
	\bibitem[\protect\citeauthoryear{Segaert, Hubert, Rousseeuw, and
		Raymaekers}{Segaert et~al.}{2024}]{mrfDepth}
	Segaert, P., M.~Hubert, P.~Rousseeuw, and J.~Raymaekers (2024).
	\newblock {\em mrfDepth: Depth Measures in Multivariate, Regression and
		Functional Settings}.
	\newblock R package version 1.0.17.
	
	\bibitem[\protect\citeauthoryear{Staerman, Mozharovskyi, Cl{\'e}men{\c{c}}on,
		and d’Alch{\'e} Buc}{Staerman et~al.}{2019}]{staerman19}
	Staerman, G., P.~Mozharovskyi, S.~Cl{\'e}men{\c{c}}on, and F.~d’Alch{\'e} Buc
	(2019).
	\newblock Functional isolation forest.
	\newblock In {\em Asian Conference on Machine Learning}, pp.\  332--347. PMLR.
	
	\bibitem[\protect\citeauthoryear{Stahel}{Stahel}{1981}]{stahel81}
	Stahel, W.~A. (1981).
	\newblock {\em Robuste sch{\"a}tzungen: infinitesimale optimalit{\"a}t und
		sch{\"a}tzungen von kovarianzmatrizen}.
	\newblock Ph.\ D. thesis, ETH Zurich.
	
\end{thebibliography}

\end{document}